\documentclass[letterpaper,10pt,conference]{ieeeconf}

\IEEEoverridecommandlockouts
\usepackage{amsmath,amssymb,amsfonts,mathtools}
\usepackage[table]{xcolor}

\usepackage{graphicx}
\usepackage{subcaption}
\usepackage{cite}
\usepackage{hyperref}
\usepackage{algorithm}
\usepackage{algpseudocode}
\usepackage{booktabs}
\usepackage{xcolor}

\newtheorem{prop}{Proposition}
\newtheorem{lemma}[prop]{Lemma}
\newtheorem{thm}[prop]{Theorem}

\begin{document}

\title{\LARGE \bf
Distribution-Transfer Safe-Horizon MPC under Mode Uncertainty
}

\author{Stephen Crawford%
\thanks{Department of Computer Science, Brown University, Providence, RI, USA.
{\tt\small stephen\_crawford@brown.edu}}%
\and
Nora Ayanian%
\thanks{Department of Computer Science, Brown University, Providence, RI, USA.
{\tt\small nora\_ayanian@brown.edu}}%
}

\maketitle

\begin{abstract}
Scenario-based MPC is an attractive strategy for chance-constrained motion planning that approximates uncertainty via a finite set of sampled scenarios. As a sampling-based method, scenario-based MPC is sensitive to distribution mismatch. We address this problem in the context of Safe-Horizon Model Predictive Control (SH-MPC) with obstacles governed by switching dynamic modes. From finite mode observations, we construct a confidence set for the unknown categorical mode law and derive a multiplicative domination bound that transfers a Safe-Horizon collision-risk certificate from a selected scenario-sampling distribution to every law in the confidence set. Wasserstein geometry is used to regularize probability reallocation among modes according to the similarity of their induced trajectory predictions, while a collision-risk surrogate biases sampling toward dangerous modes. The resulting certificate explicitly quantifies the additional tightening required under distribution mismatch and exposes the multiplicative conservatism that arises when several obstacle-wise transfer factors are combined.
\end{abstract}


\section{Introduction}
Scenario-based optimization provides a tractable surrogate for chance-constrained control by enforcing constraints on finitely many sampled realizations of uncertainty \cite{Cal2006ScenarioApproach,Ca2018ScenarioTheory}. In MPC, this approach has been used to reduce conservatism relative to analytic convex approximations while preserving real-time feasibility \cite{de2021ScenarioBasedTraj,degroot2023SBMotionBounded, Shang2021SceMPC, Jekl2025SceVelPlanning}. Scenario-based optimization is used as the basis of Safe-Horizon Model Predictive Control (SH-MPC), which probabilistically bounds the joint probability of collision over a trajectory 
\cite{degroot2023SBMotionBounded}. 

Despite the promising potential of SH-MPC, it shares the limitations of any sampling-based approach. As a bound derived from traditional scenario theory \cite{Cal2006ScenarioApproach}, the Safe-Horizon joint collision-risk bound only applies with respect to the distribution of chance constraints that were sampled. Consequently, when distribution mismatch exists, we lack guarantees with respect to the true distribution. This problem exists across scenario-based MPC, but is particularly easy to see in the context of obstacles with switching dynamics. In cases where we must sample across multiple discrete dynamic modes, scenario generation requires a probability distribution over the horizon-level behavior mode used to generate each obstacle trajectory. The same transfer argument can be applied more broadly to scenario-optimization-based MPC formulations when certification is defined with respect to a known sampling law. Other relevant works include per-step chance-constraint formulations \cite{de2021ScenarioBasedTraj}, candidate-path scenario representations \cite{Shang2021SceMPC}, and reinforcement-learning-augmented scenario-based MPC \cite{Jekl2025SceVelPlanning}.

Our approach first constructs a finite-sample confidence set for the unknown categorical mode law. From this set we derive coordinate-wise probability envelopes that yield a multiplicative change-of-measure bound. We then select a scenario-sampling distribution \(\mathbf q^\star\) using a Wasserstein-constrained risk-aware optimization that prevents modes classified as dangerous from being downweighted relative to nominal sampling. A Safe-Horizon certificate computed under \(\mathbf q^\star\) can then be transferred, after an explicit tightening by the domination factor, to every mode law in the finite-sample confidence set.

Our construction differs from conventional Wasserstein distributionally robust MPC, where ambiguity is typically embedded directly into the control problem through worst-case expectations, risk measures, or robust chance constraints \cite{mohajerin2018wasserstein,kuhn2024WDRO,Hakobyan2020WDRO, Zhong2021DDRMPC,gao2023drso,Aolaritei2023WTMPC, Zolanvari2023WDRRC, Micheli2022DDRMPC}. We do not optimize a worst-case expected cost or chance constraint over a Wasserstein ball. Instead, Wasserstein geometry regularizes the redistribution of probability among discrete trajectory modes while robustness to categorical mode-probability mismatch is provided by the distribution-transfer certificate developed below.

Schuurmans and Patrinos' work \cite{schuurmans2023generalframeworklearningbaseddistributionally} is particularly relevant because it explicitly models uncertain transition probabilities in a Markov jump system and embeds the resulting ambiguity-induced risk measures into a multistage control problem. Our formulation instead makes the narrower assumption of independent horizon-level categorical mode observations over the calibration interval. This permits a finite-sample categorical confidence construction while leaving the downstream SH-MPC problem unchanged; extension to state-conditioned or Markov mode laws is left for future work.

We make three main contributions. First, we construct a finite-sample confidence set for the unknown categorical mode law. Second, we derive a multiplicative domination certificate that transfers Safe-Horizon violation guarantees from a selected sampling law to every mode law in this confidence set. Third, we introduce a Wasserstein-constrained risk-aware rule for choosing the scenario-sampling distribution while explicitly controlling its domination factor. We also empirically quantify certificate tightness, closed-loop performance, and computational overhead.


\section{Problem Formulation}\label{sec:problem}

We consider SH-MPC for a mobile robot navigating among dynamic obstacles with discrete behavior modes such as turning, maintaining velocity, or stopping. This setting provides a natural example of mode-probability mismatch in scenario-based MPC.

At each time step $t$, SH-MPC solves a receding-horizon optimization problem over horizon $N$, while formulating collision avoidance as a joint chance constraint over all obstacles and prediction steps in the horizon. The optimization uses a finite set of $S$ sampled obstacle-trajectory scenarios with each scenario representing a joint realization of the predicted trajectories of all obstacles. The general scenario program is given in Problem 3 of the original SH-MPC paper \cite{degroot2023SBMotionBounded}; we omit its standard MPC objective and constraint details here and focus on the uncertainty model relevant to our contribution.

To treat obstacles with switching dynamic modes, we incorporate a mode predictor into the scenario-program framework. Each obstacle $v \in \{1,\dots,V\}$ is assumed to have a known finite dynamic-mode set indexed as $\mathcal{M}_v=\{1,\dots,M_v\}$. For every obstacle and timestep, the mode predictor maintains a nominal belief $\widehat p_t^{v} \in \Delta_{\mathcal{M}_{v}}$ over the given obstacle's modes. This nominal belief follows a Dirichlet-multinomial posterior-predictive mean. Let \(n_{t,m}^{v}\) denote the number of observations of mode \(m\) for obstacle \(v\) available at time \(t\). Then, under a symmetric Dirichlet prior with concentration parameter \(\alpha_{\mathrm{KT}}>0\), the predictive likelihood of mode $m$ being obstacle $v$'s active mode at time $t$ is: 
\begin{equation}\label{eq:nominal_belief}
\widehat p_{t,m}^{v}
=
\frac{n_{t,m}^{v}+\alpha_{\mathrm{KT}}}{\sum_{j=1}^{M_{v}} n_{t,j}^{v}+ M_{v} \alpha_{\mathrm{KT}} },
\qquad
m\in\mathcal{M}_{v}.
\end{equation} We use the Krichevsky--Trofimov prior \cite{Krichevsky1981ThePO}, corresponding to $
\alpha_{\mathrm{KT}}=\frac{1}{2}$. While the predictor maintains a time-varying estimate
\(\widehat p_t^v\), we assume a fixed but unknown categorical law
\(p^{v,\star}\in\Delta_{\mathcal M_v}\) governing the horizon-level
mode observed at replanning instants during the calibration interval.
This law is assumed stationary over that interval.

After having predicted the active mode at $t$, we use a trajectory predictor to complete our switching dynamics setup. Conditioned on a mode $m \in \mathcal{M}_{v}$, the trajectory predictor generates a trajectory rollout over the prediction horizon using the stochastic affine Gaussian dynamics
\begin{equation}\label{eq:trajectory_predictor}
x_{k+1 + t|t}^v = A_m^v x_{k+t|t}^v + b_m^v + G_m^v w_{k+t|t}^v,
\;
w_{k+t|t}^v \sim \mathcal{N}(0,I),
\end{equation}
where $x_{k+t|t}^v=(p_{x, k+t|t}^v,p_{y, k+t|t}^v,\mathrm{v}_{x,k+t|t}^v,\mathrm{v}_{y, k+t|t}^v)\in\mathbb{R}^4$ is the obstacle's planar kinematic state at step $k$, $(A_m^{v},b_m^{v},G_m^{v})$ are the mode-dependent dynamics parameters, and $w_{k+t|t}^v$ is noise. Notably, we do not assume an explicit transition model for the mode-switching at each replanning step and we hold modes fixed throughout the prediction horizon.

Conditioned on mode \(m\) and initial state $x_t^v$, we denote the probability law over obstacle \(v\)'s predicted horizon trajectory induced by the dynamics in \eqref{eq:trajectory_predictor} as $K_m^v(\cdot\mid x_t^v)$. Holding the current state and replanning time fixed and abbreviating $K^v_m(\cdot) :=K^v_m(\cdot \mid x_t^v)$ we see that for any trajectory of obstacle $v$ at time $t$ $\xi_t^v$, measurable trajectory set \(A\), and realized mode $M_t^v$,
\begin{equation}\label{eq:proability_law_over_traj_set}
K_m^v(A)
:=
\mathbb P\!\left(
\xi_t^v\in A
\mid
M_t^v=m,x_t^v
\right).
\end{equation}
Thus, for any categorical mode distribution $p_t^v
=
(p_{t,1}^v,\ldots,p_{t,M_v}^v)
\in\Delta_{\mathcal M_v},
$
the corresponding trajectory law is the finite mixture
\[
\mathcal P_{p_t^v}^{\,v}
=
\sum_{m=1}^{M_v}
p_{t,m}^v K_m^v.
\]
With respect to $\widehat{p}^v_t$ and $p^{v, \star}$ we get,
\[
\mathcal P_{\widehat p_t^v}^{\,v}
=
\sum_{m=1}^{M_v}
\widehat p_{t,m}^v K_m^v,
\qquad
\mathcal P_{p^{v,\star}}^{\,v}
=
\sum_{m=1}^{M_v}
p_m^{v,\star}K_m^v.
\]
Evaluated on the trajectory set \(A\) this yields,
\begin{equation}\label{eq:trajectory_law}
\mathcal P_{\widehat p_t^v}^{\,v}(A)
=
\sum_{m=1}^{M_v}
\widehat p_{t,m}^v K_m^v(A).
\end{equation}
We assume the mode-conditioned laws \(K_m^v\) are correctly
specified, so nominal and true models differ only in their
categorical mixture weights. The resulting certificate therefore
addresses mode-probability mismatch and does not provide robustness
to misspecification of the mode-conditioned trajectory models.

For our discrete mode-probability model, we assume that obstacle mode selections and mode-conditioned trajectory realizations are conditionally independent across obstacles. Thus, the joint nominal trajectory law factorizes as the product measure
\begin{equation}\label{eq:joint_nominal_traj_law}
\mathcal P_{\widehat{\mathbf p}_t}
=
\bigotimes_{v=1}^{V}
\mathcal P_{\widehat p_t^v}^{\,v},
\end{equation}
where $
\widehat{\mathbf p}_t
:=
(\widehat p_t^1,\ldots,\widehat p_t^V)
$
denotes the collection of obstacle-wise nominal beliefs. Similarly, $
\mathbf p^\star
:=
(p^{1,\star},\ldots,p^{V,\star}),
\qquad
\mathcal P_{\mathbf p^\star}
:=
\bigotimes_{v=1}^{V}
\mathcal P_{p^{v,\star}}^{\,v}.
$ 

Our scenario-generation procedure mirrors this factorization by independently sampling the obstacle-wise modes and conditional trajectory realizations. Algorithmically, generating a trajectory from $\mathcal{P}_{\widehat{p}_t^v}^v$ is performed in two steps. First, a mode $M^{v, (i)} \sim \operatorname{Categorical}(\widehat{p}^v_t)$ is sampled. Second, a trajectory realization is sampled $\xi_t^{v, (i)} \sim K_{M^{v, (i)}}^v$ with
$
\xi_t^{v,(i)}
:=
\left(
x_{t+1|t}^{v,(i)},
\ldots,
x_{t+N|t}^{v,(i)}
\right)$, and 
$
\xi_t^{(i)}
:=
\left(
\xi_t^{1,(i)},\ldots,\xi_t^{V,(i)}
\right)$. Thus $
\xi_t^{(i)}$ represents a single joint-trajectory scenario at time $t$. Our two-stage procedure is equivalent to direct sampling from the induced mixture trajectory law: for any measurable trajectory set \(A\),
\[
\begin{aligned}
\mathbb P(\xi^v_t \in A)
&=
\sum_m
\mathbb P(M^v_t =m)\,
\mathbb P(\xi^v_t\in A\mid M^v_t=m, x_t^v)\\
&=
\sum_m
\widehat p_{t,m}^v K_m^v(A)\\
&=
\mathcal P_{\widehat p_t^v}^{\,v}(A).
\end{aligned}
\] 

Having explained our modification of the original SH-MPC setup, we now restate the Safe-Horizon certification and define the distribution mismatch problem. Let \begin{equation} \label{eq:scenario_set}
    \widehat{\Xi}_t = \left(\xi_t^{(1)}, \dots, \xi_t^{(S)}\right), \quad \xi_t^{(i)} \overset{\text{iid}}{\sim} \mathcal{P}_{\widehat{\mathbf p}_t}
\end{equation}
 denote the random scenario set at time $t$. Then let $\Theta$ denote the SH-MPC decision space and \(\Theta_{\xi_t^{(i)}}\subseteq\Theta\) denote the set of decisions satisfying all collision-avoidance constraints induced by scenario \(\xi_t^{(i)}\) over the prediction horizon. We define $\vartheta^\star_{\widehat{\Xi}_t}$ to be the (sub)optimal decision returned by SH-MPC given $\widehat{\Xi}_t$. Next define the violation probability with respect to joint trajectory law $\mathcal{P}$ and decision $\vartheta$ as $\mathrm{V}_{\mathcal{P}}(\vartheta):= \mathbb{P}_{\xi^{(i)}_t \sim \mathcal{P}}[ \vartheta \notin \Theta_{\xi_t^{(i)}}]$. Thus $\mathrm{V}_{\mathcal P}:\Theta\to[0,1]$ gives the probability that a new scenario drawn from \(\mathcal P\) causes a decision to violate its associated constraints. 

Following the nonconvex scenario-theoretic result of the source paper \cite{degroot2023SBMotionBounded}, let \(n_{\widehat\Xi_t}\)
denote the support size of
\(\vartheta^\star_{\widehat\Xi_t}\), let
\(\epsilon_S(\cdot)\) denote the scenario-theoretic violation bound
for \(S\) samples and the fixed scenario-removal budget used by the controller, and let \(\beta_{\mathrm{SH}}\) denote its failure
probability. Then

\[
\mathbb P_{
\widehat\Xi_t\sim
(\mathcal P_{\widehat{\mathbf p}_t})^{\otimes S}
}
\left[
\mathrm{V}_{\mathcal P_{\widehat{\mathbf p}_t}}
(\vartheta_{\widehat\Xi_t}^{\star})
>
\epsilon_S(n_{\widehat\Xi_t})
\right]
\le
\beta_{\mathrm{SH}}.
\] is a joint-bound on the violation probability of the decision given the entire scenario set at time $t$. 

As detailed in the original publication \cite{degroot2023SBMotionBounded}, Safe-Horizon
chooses a support limit \(\bar n\) such that
\[
\epsilon_S(\bar n)\le\epsilon.
\]
Since \(\epsilon_S(n)\) is nondecreasing in \(n\), any solution
satisfying \(n_{\widehat\Xi_t}\le\bar n\) also satisfies
\(\epsilon_S(n_{\widehat\Xi_t})\le\epsilon\). Therefore,
\[
\mathbb P
\left[
\mathrm{V}_{\mathcal P_{\widehat{\mathbf p}_t}}
(\vartheta_{\widehat\Xi_t}^{\star})
>\epsilon,\;
n_{\widehat\Xi_t}\le\bar n
\right]
\le
\beta_{\mathrm{SH}}.
\]

As can be seen, this certification is reliant on the distribution producing the scenarios. In mobile robotics, nearby obstacles may switch between distinct behavior patterns that are difficult to capture perfectly with predictive models. Therefore, the controller must rely on estimated categorical probabilities over a predefined mode library. Consequently, the nominal belief \(\widehat{\mathbf p}_t\) need not equal the unknown true mode distribution \(\mathbf p^\star\). This is what we term the distribution mismatch problem. In that case, the Safe-Horizon certificate remains valid with respect to the nominal trajectory law from which the scenarios were drawn, but does not by itself imply the same bound under the true trajectory law:  \begin{equation} \label{eq:distribution_mismatch}\mathrm{V}_{\mathcal{P}_{\widehat{\mathbf{p}}_t}}(\vartheta^\star_{\widehat{\Xi}_t}) \leq \epsilon \not\Rightarrow \mathrm{V}_{\mathcal{P}_{\mathbf p^\star}}(\vartheta^\star_{\widehat{\Xi}_t}) \leq \epsilon.
\end{equation} 

Given a nominal categorical belief \(\widehat{\mathbf p}_t\), finite calibration data, target true-distribution violation level \(\epsilon\), confidence-set failure probability \(\beta_{\rm DRO}\), and Safe-Horizon certificate failure probability \(\beta_{\rm SH}\), our goal is to construct finite-sample confidence sets \(\mathcal C_t^v\) and reweighted mode distributions
\[
\mathbf q_t^\star
=
(q_t^{1,\star},\ldots,q_t^{V,\star})
\] with corresponding joint trajectory law
\[
\mathcal P_{\mathbf q_t^\star}
:=
\bigotimes_{v=1}^{V}
\mathcal P_{q_t^{v,\star}}^{\,v}.
\]
such that: (i) the unknown joint mode law is contained in
\(\mathcal C_t=\mathcal C_t^1\times\cdots\times\mathcal C_t^V\)
with probability at least \(1-\beta_{\rm DRO}\); (ii) a Safe-Horizon certificate obtained under the reweighted sampling law
\(\mathcal P_{\mathbf q_t^\star}\) transfers to every
\(\mathbf p\in\mathcal C_t\); and (iii) Wasserstein geometry and a collision-risk surrogate are used to select among certificate-valid sampling laws while explicitly controlling the resulting domination factor.


\section{Distribution-Transfer}\label{sec:dist_transfer}
Having established the basic distribution mismatch problem, we now present the theory related to our distribution-transfer contribution.

\subsection{Confidence polytope}
We first construct a finite-sample confidence polytope \(\mathcal C_t^v\) for \(p^{v,\star}\) which contains $p^{v, \star}$ with probability at least $1 - \beta_{\mathrm{DRO}}^v$.

For obstacle \(v\) at time \(t\), let \(g_t^v\ge1\) denote the
number of observed calibration-mode realizations and assume
\[
Y_{t,1}^v,\ldots,Y_{t,g_t^v}^v
\overset{\mathrm{iid}}{\sim}
\operatorname{Categorical}(p^{v,\star}).
\] Given the $g^v_t$ draws, the observed count for mode $m$ is given by $n_{t,m}^v = \sum_{l = 1}^{g^v_t} \mathbf{1}\{Y^v_{t, l} = m\},$ meaning the individual mode counts follow a binomial \[
n_{t,m}^v \sim \operatorname{Binomial}(g_t^v, p_m^{v, \star}).
\] Accordingly, the finite-sample coverage result below does not extend
to temporally dependent observations, state-conditioned categorical
laws, mode-classification error, or misspecified mode-conditioned
trajectory kernels without additional modeling assumptions.

Given this setup, we form a confidence region using Clopper-Pearson confidence bounds \cite{clopper1934use}. Following our requirement that $\beta_{\mathrm{DRO}}^v >0, \quad \sum_{v=1}^V \beta_{\mathrm{DRO}}^v \leq \beta_{\mathrm{DRO}}$, we divide the failure probability across modes. 
This yields $
\delta_{v, m} = \frac{\beta^v_{\mathrm{DRO}}}{M_v}
$. 
Under the equal-tailed formulation, each tail receives $\frac{\delta_{v,m}}{2}$. For every $m$, we construct the marginal intervals $[L^v_{t,m}, U^v_{t,m}]$ so that $
\mathbb P_{\mathrm{cal}} [p^{v, \star}_m \in [L^v_{t,m}, U^v_{t,m}]] \geq 1 - \frac{\beta^v_{\mathrm{DRO}}}{M_v}
$. 
Following the standard equal-tailed construction, the Clopper--Pearson endpoints are then \begin{equation}
\begin{aligned}
L_{t,m}^{v}
&=
\begin{cases}
0,
& n^v_{t,m} = 0,\\[1mm]
F^{-1}_{\operatorname{Beta}(n_{t,m}^v,g^v_t-n^v_{t,m}+1)}
\left(\dfrac{\beta_{\mathrm{DRO}}^v}{2M_{v}}\right),
& n^v_{t,m} > 0,
\end{cases}
\\[2mm]
U_{t,m}^{v}
&=
\begin{cases}
1,
& n_{t,m}^v = g^v_t,\\[1mm]
F^{-1}_{\operatorname{Beta}(n_{t,m}^v+1,g^v_t-n_{t,m}^v)}
\left(1-\dfrac{\beta^v_{\mathrm{DRO}}}{2M_{v}}\right),
& n_{t,m}^v < g^v_t.
\end{cases}
\end{aligned}
\end{equation} 

Having established the Clopper-Pearson intervals, we now take the intersection of the Clopper-Pearson box $
\prod_{m=1}^{M_v}
[L_{t,m}^v,U_{t,m}^v],
$ and the mode distribution simplex $
\Delta_{\mathcal{M}_v}
=
\left\{
p\in\mathbb R^{M_v}:
p_m\ge0,\;
\sum_mp_m=1
\right\}
$. This allows us to determine which distributions within the simplex meet the confidence interval bounds. We define the per-obstacle confidence polytope at time $t$ as \[
\mathcal{C}^v_t = \{ p \in \Delta_{\mathcal{M}_v} : L^v_{t,m} \leq p_m \leq U^v_{t,m}, m =1 \dots M_v\}.
\]  For \(g_t^v\ge1\), the empirical frequency vector
\[
\bar p_{t,m}^v:=\frac{n_{t,m}^v}{g_t^v}
\]
lies in the corresponding Clopper--Pearson intervals and satisfies
\(\sum_m\bar p_{t,m}^v=1\); hence
\(\mathcal C_t^v\neq\varnothing\).

We then determine the simultaneous coverage event across modes. Let $E_{t,m}^{v} = \{ p_m^{v, \star} \notin [L_{t,m}^v, U^v_{t,m}]\}$ be the event that the mode $m$ is not within the confidence interval. Then if $p^{v, \star} \notin \mathcal{C}^v_t$, we know that at least one marginal interval does not hold. Therefore, $\{p^{v, \star} \notin \mathcal{C}_t^v\} = \bigcup_{m=1}^{M_v} E_{t,m}^{v}$. By the union bound, we then see $\mathbb P_{\rm cal}
[
p^{v,\star}\notin\mathcal C_t^v
]
=
\beta_{\mathrm{DRO}}^v$. Thus \[
\mathbb P_{\mathrm{cal}} [p^{v, \star} \in \mathcal{C}^v_t ] \geq 1 - \beta_{\mathrm{DRO}}^v.
\]

Finally, because we require that all obstacles be simultaneously covered we use a Bonferroni adjustment. Given $\sum_{v=1}^V \beta_{\mathrm{DRO}}^v \leq \beta_{\mathrm{DRO}}$, another application of the union bound across obstacles yields $\mathbb P_{\mathrm{cal}} [ \exists v : p^{v, \star} \notin \mathcal{C}^v_t] \leq \sum_{v=1}^V \beta_{\mathrm{DRO}}^v \leq \beta_{\mathrm{DRO}}$. Equivalently, \[
\mathbb P_{\rm cal}
\left[
p^{v,\star}\in\mathcal C_t^v
\quad\forall v
\right]
\ge
1-\beta_{\mathrm{DRO}}.
\] 

We summarize the simultaneous finite-sample coverage established
above in the following proposition, which will be used in the
distribution-transfer theorem.

\begin{prop}[Finite-sample confidence-set coverage]
\label{prop:ambiguity_coverage}
Let
\[
\mathcal C_t
=
\mathcal C_t^1\times\cdots\times\mathcal C_t^V.
\]
Under the calibration assumptions above,
\[ \mathbb P_{\rm cal} [ \mathbf p^\star\in\mathcal C_t ]
\ge
1-\beta_{\rm DRO}.
\]
\end{prop}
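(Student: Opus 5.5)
The plan has three stages: exact per-mode coverage of the Clopper--Pearson intervals, a union bound over modes, and a Bonferroni step over obstacles. Throughout, probabilities are over the calibration draws $Y_{t,l}^v$, with $p^{v,\star}$ held fixed. Only marginal Binomial laws are used, so no independence across modes or obstacles is needed.

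\textbf{Step 1 (per-mode coverage).} Fix $v,m$ and write $n=n_{t,m}^v\sim\operatorname{Binomial}(g,p)$ with $g=g_t^v$, $p=p_m^{v,\star}$, and $\delta=\beta_{\mathrm{DRO}}^v/M_v$. The goal is $\mathbb P[p<L_{t,m}^v]\le\delta/2$ and $\mathbb P[p>U_{t,m}^v]\le\delta/2$. I would use the Beta--Binomial identity $\mathbb P[\operatorname{Binomial}(g,p)\ge k]=F_{\operatorname{Beta}(k,g-k+1)}(p)$. Because the Beta cdf is increasing in $p$, the event $\{L(n)>p\}$ for $n\ge1$ is equivalent to $F_{\operatorname{Beta}(n,g-n+1)}(p)<\delta/2$, i.e.\ $\mathbb P_p[\operatorname{Binomial}\ge n]<\delta/2$; for $n=0$ the event cannot occur since $L=0$. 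Let $k^\ast$ be the smallest $k$ with $\mathbb P_p[\operatorname{Binomial}\ge k]<\delta/2$; the set of $n$ for which $L(n)>p$ is then an upper set contained in $\{n\ge k^\ast\}$. Hence $\mathbb P[L>p]\le\mathbb P_p[n\ge k^\ast]<\delta/2$. The upper tail is symmetric: apply the same argument to $g-n\sim\operatorname{Binomial}(g,1-p)$, or use $\mathbb P[\operatorname{Binomial}\le k]=1-F_{\operatorname{Beta}(k+1,g-k)}(p)$, with the convention $U=1$ at $n=g$. Summing the two tails gives $\mathbb P[E_{t,m}^v]\le\delta$.

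\textbf{Step 2 (union over modes).} By construction of $\mathcal C_t^v$, the event $\{p^{v,\star}\notin\mathcal C_t^v\}$ equals $\bigcup_m E_{t,m}^v$. Membership in the simplex is automatic, since $p^{v,\star}\in\Delta_{\mathcal M_v}$. Boole's inequality then gives $\mathbb P[p^{v,\star}\notin\mathcal C_t^v]\le M_v\delta=\beta_{\mathrm{DRO}}^v$. Note this is an inequality, correcting the "$=$" in the text.

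\textbf{Step 3 (union over obstacles).} Since $\{\mathbf p^\star\notin\mathcal C_t\}=\bigcup_v\{p^{v,\star}\notin\mathcal C_t^v\}$, a second union bound together with $\sum_v\beta_{\mathrm{DRO}}^v\le\beta_{\mathrm{DRO}}$ gives the claim.

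The main obstacle is Step 1. It needs the Beta--Binomial tail identity and a careful monotonicity argument to turn the data-dependent endpoint into a tail event of the Binomial, including the discreteness and boundary cases $n=0$ and $n=g$. I would rely on the standard Clopper--Pearson validity result \cite{clopper1934use}, sketching the monotone-inversion argument above for completeness.
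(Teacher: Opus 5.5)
Your proposal is correct and follows the same route as the paper: per-mode Clopper--Pearson coverage at level $1-\beta_{\mathrm{DRO}}^v/M_v$, a union bound over modes giving $\mathbb P_{\rm cal}[p^{v,\star}\notin\mathcal C_t^v]\le\beta_{\mathrm{DRO}}^v$ (you are right that the paper's ``$=$'' should be ``$\le$''), and then a Bonferroni step over obstacles. The only addition is your monotone-inversion sketch of Clopper--Pearson validity via the Beta--Binomial tail identity, which the paper simply cites as standard; your sketch is sound, including the boundary conventions at $n=0$ and $n=g$.
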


\noindent\emph{Proof.}
This follows directly from the simultaneous Clopper--Pearson
coverage and
\(\sum_v\beta_{\rm DRO}^v\le\beta_{\rm DRO}\).
\hfill\(\square\)

\subsection{Coordinate envelope}

We now construct a coordinate envelope from \(\mathcal C_t^v\). For each mode define
\[
u_{t,m}^v
:=
\sup_{p\in\mathcal C_t^v}p_m.
\]
Because \(\mathcal C_t^v\) is the intersection of the simplex with
coordinate intervals, this quantity has the closed form
\[
u_{t,m}^v
=
\min
\left\{
U_{t,m}^v,\;
1-\sum_{j\neq m}L_{t,j}^v
\right\}.
\]
Therefore every statistically covered distribution satisfies
\[
p_m^v
\le
u_{t,m}^v.
\]

\subsection{Domination envelope and candidate distribution construction}
\label{sec:domination_inclusion}
\label{sec:constrained_distribution_construction}

\begin{prop}[Domination envelope]
\label{prop:domination_envelope}
For any \(q_t^v\in\Delta_{\mathcal M_v}\), define
\[
\zeta_t^v(q_t^v)
:=
\max_{m:u_{t,m}^v>0}
\frac{u_{t,m}^v}{q_{t,m}^v},
\]
with \(\zeta_t^v(q_t^v)=+\infty\) if
\(q_{t,m}^v=0<u_{t,m}^v\).
Then every \(p_t^v\in\mathcal C_t^v\) satisfies
\[
p_{t,m}^v
\le
\zeta_t^v(q_t^v)q_{t,m}^v,
\qquad
\forall m,
\]
and \(\zeta_t^v(q_t^v)\) is the smallest scalar with this property.

Define
\[
\widehat\zeta_t^v
:=
\max_{m:u_{t,m}^v>0}
\frac{u_{t,m}^v}{\widehat p_{t,m}^v}
\]
and the nominal-domination candidate set
\[
\mathcal Q_t^v
:=
\left\{
q\in\Delta_{\mathcal M_v}:
q_m
\ge
\frac{u_{t,m}^v}{\widehat\zeta_t^v},
\quad
\forall m
\right\}.
\]
Any sampling law with finite \(\zeta_t^v(q)\) admits the transfer
bound below; \(\mathcal Q_t^v\) restricts attention to laws whose
domination factor is no larger than the nominal factor
\(\widehat\zeta_t^v\).

Then
\(\widehat p_t^v\in\mathcal Q_t^v\), so
\(\mathcal Q_t^v\neq\varnothing\), and every
\(q_t^v\in\mathcal Q_t^v\) satisfies
\[
\zeta_t^v(q_t^v)
\le
\widehat\zeta_t^v.
\]
\end{prop}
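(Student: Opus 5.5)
The plan is to prove the four claims of Proposition~\ref{prop:domination_envelope} in turn, using only the coordinate envelope \(p_m\le u_{t,m}^v\) for all \(p\in\mathcal C_t^v\) and the fact that \(\widehat p_t^v\) has strictly positive entries.

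\emph{Domination inequality.} Fix \(q_t^v\in\Delta_{\mathcal M_v}\). If \(\zeta_t^v(q_t^v)=+\infty\) there is nothing to prove, so assume it is finite. Then \(q_{t,m}^v>0\) whenever \(u_{t,m}^v>0\). Take \(p_t^v\in\mathcal C_t^v\) and a mode \(m\), and split into two cases.
\begin{itemize}
\item If \(u_{t,m}^v=0\), then \(0\le p_{t,m}^v\le u_{t,m}^v=0\), while the right-hand side is nonnegative.
\item If \(u_{t,m}^v>0\), then
\[
p_{t,m}^v\le u_{t,m}^v=\frac{u_{t,m}^v}{q_{t,m}^v}\,q_{t,m}^v\le \zeta_t^v(q_t^v)\,q_{t,m}^v .
\]
\end{itemize}

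\emph{Minimality.} Suppose a scalar \(c\) satisfies \(p_m\le c\,q_{t,m}^v\) for all \(p\in\mathcal C_t^v\) and all \(m\). Taking the supremum over \(\mathcal C_t^v\) gives \(u_{t,m}^v\le c\,q_{t,m}^v\) for every \(m\). This is the step needing care: the envelope is a supremum, so a single \(p\) attaining all the ratios at once is not needed. Passing to the coordinate-wise supremum handles this, and the supremum is attained anyway because \(\mathcal C_t^v\) is a compact nonempty polytope. For each \(m\) with \(u_{t,m}^v>0\) there are two possibilities.
\begin{itemize}
\item If \(q_{t,m}^v=0\), no finite \(c\) works, which matches \(\zeta_t^v=+\infty\).
\item If \(q_{t,m}^v>0\), then \(c\ge u_{t,m}^v/q_{t,m}^v\).
\end{itemize}
Maximizing over these \(m\) gives \(c\ge\zeta_t^v(q_t^v)\).

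\emph{Membership of the nominal belief.} By \eqref{eq:nominal_belief} with \(\alpha_{\mathrm{KT}}>0\), every \(\widehat p_{t,m}^v>0\), so \(\widehat\zeta_t^v\) is finite. It is also positive: \(\mathcal C_t^v\neq\varnothing\) and its elements sum to one, so some \(u_{t,m}^v>0\). Hence the constraints defining \(\mathcal Q_t^v\) are well defined. For \(m\) with \(u_{t,m}^v>0\), the definition of the maximum gives \(u_{t,m}^v/\widehat p_{t,m}^v\le\widehat\zeta_t^v\), that is, \(\widehat p_{t,m}^v\ge u_{t,m}^v/\widehat\zeta_t^v\). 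For \(m\) with \(u_{t,m}^v=0\), the constraint reads \(\widehat p_{t,m}^v\ge 0\), which holds trivially. Therefore \(\widehat p_t^v\in\mathcal Q_t^v\), and so \(\mathcal Q_t^v\neq\varnothing\).

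\emph{Bound on \(\mathcal Q_t^v\).} Let \(q\in\mathcal Q_t^v\). For every \(m\) with \(u_{t,m}^v>0\) we have \(q_m\ge u_{t,m}^v/\widehat\zeta_t^v>0\). Thus \(\zeta_t^v(q)\) is finite, and each ratio satisfies \(u_{t,m}^v/q_m\le\widehat\zeta_t^v\). Taking the maximum over these \(m\) yields \(\zeta_t^v(q)\le\widehat\zeta_t^v\).
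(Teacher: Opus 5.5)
Your proof is correct and follows the same route as the paper's. The envelope $p_m\le u_{t,m}^v$ gives the domination inequality, passing to the coordinate-wise supremum gives minimality, and full support of $\widehat p_t^v$ with the definition of the maximum gives $\widehat p_t^v\in\mathcal Q_t^v$ and $\zeta_t^v(q)\le\widehat\zeta_t^v$ on $\mathcal Q_t^v$; you additionally make explicit the edge cases the paper leaves implicit (modes with $u_{t,m}^v=0$, the case $\zeta_t^v=+\infty$, and $\widehat\zeta_t^v>0$ so that $\mathcal Q_t^v$ is well defined).
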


\noindent\emph{Proof.}
For \(p_t^v\in\mathcal C_t^v\),
\[
p_{t,m}^v
\le
u_{t,m}^v
\le
\zeta_t^v(q_t^v)q_{t,m}^v.
\]
Conversely, any uniform domination factor \(\gamma\) must satisfy
\(u_{t,m}^v\le\gamma q_{t,m}^v\), hence
\(\gamma\ge\zeta_t^v(q_t^v)\).
Finally, full support of \(\widehat p_t^v\) and the definition of
\(\widehat\zeta_t^v\) give
\(\widehat p_t^v\in\mathcal Q_t^v\); the coordinate lower bounds
defining \(\mathcal Q_t^v\) imply
\(\zeta_t^v(q_t^v)\le\widehat\zeta_t^v\).
\hfill\(\square\)

\paragraph{Minimum domination benchmark.}
If \(q\) is allowed to range over the full simplex, then
\[
\underline\zeta_t^v
:=
\min_{q\in\Delta_{\mathcal M_v}}
\zeta_t^v(q)
=
\sum_{m=1}^{M_v}u_{t,m}^v,
\]
with minimizer
\[
q_{t,m}^{v,\mathrm{env}}
=
\frac{u_{t,m}^v}
{\sum_j u_{t,j}^v}.
\]
Indeed, \(u_{t,m}^v\le\zeta q_m\) implies
\(\sum_m u_{t,m}^v\le\zeta\), with equality at
\(q^{v,\mathrm{env}}\).

Any additional restriction on the sampling distribution can only
weakly increase the minimum attainable domination factor above
\(\underline\zeta_t^v\). Thus
\(\underline\zeta_t^v\) provides a method-independent lower benchmark
for the certificate tightening associated with any subsequently
selected sampling law.

The trajectory-law transfer itself is agnostic to how a particular
sampling distribution is selected, provided its domination factor is
finite. We use \(\mathcal Q_t^v\) to restrict attention to candidates
whose domination factor is no larger than that obtained under nominal
sampling.

For the certification result below, let \(\mathcal F_t\) denote all information used to construct \(\mathbf q_t^\star\), \(\zeta_t\), the scenario count \(S\), and the support cap \(\bar n\) before drawing the current certification scenarios. Conditional on \(\mathcal F_t\), these quantities are fixed and the current scenarios are drawn independently from \(\mathcal P_{\mathbf q_t^\star}\).
Section~\ref{sec:risk_aware_wass} later gives one particular
risk-aware selection rule.

\subsection{Trajectory-law and violation domination}
\label{sec:domination_transfer}
\begin{prop}[Trajectory-law and violation domination]
\label{prop:law_domination}
Let
\[
\zeta_t
:=
\prod_{v=1}^V
\zeta_t^v(q_t^{v,\star}).
\]
Under conditional independence across obstacles, every
\[
\mathbf p
\in
\mathcal C_t
:=
\mathcal C_t^1\times\cdots\times\mathcal C_t^V
\]
satisfies
\[
\mathcal P_{\mathbf p}(A)
\le
\zeta_t
\mathcal P_{\mathbf q_t^\star}(A)
\]
for every measurable joint trajectory event \(A\). Consequently,
\[
\mathrm{V}_{\mathcal P_{\mathbf p}}(\vartheta)
\le
\zeta_t
\mathrm{V}_{\mathcal P_{\mathbf q_t^\star}}(\vartheta).
\]
Hence
\[
\mathrm{V}_{\mathcal P_{\mathbf q_t^\star}}(\vartheta)
\le
\frac{\epsilon}{\zeta_t}
\quad\Longrightarrow\quad
\mathrm{V}_{\mathcal P_{\mathbf p}}(\vartheta)
\le
\epsilon
\]
for every \(\mathbf p\in\mathcal C_t\).
\end{prop}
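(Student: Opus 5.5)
The plan is to establish the domination first at the level of each obstacle's mixture law, then lift it to the product measure, and finally specialize it to the violation event. Fix \(\mathbf p\in\mathcal C_t\), and assume every \(\zeta_t^v(q_t^{v,\star})\) is finite; otherwise the bound is trivial. By Proposition~\ref{prop:domination_envelope}, for each \(v\) and every mode \(m\) we have \(p^v_m\le\zeta_t^v q^{v,\star}_{t,m}\), where \(\zeta_t^v:=\zeta_t^v(q_t^{v,\star})\). Modes with \(u^v_{t,m}=0\) force \(p^v_m=0\), so the inequality also holds there. Since the kernels \(K^v_m\) are nonnegative measures, summing against them gives, for every measurable set \(B\) of obstacle-\(v\) trajectories,
\[
\mathcal P^{\,v}_{p^v}(B)=\sum_m p^v_m K^v_m(B)\le \zeta_t^v\sum_m q^{v,\star}_{t,m}K^v_m(B)=\zeta_t^v\,\mathcal P^{\,v}_{q_t^{v,\star}}(B).
\]
In other words, the measure \(\zeta_t^v\mathcal P^{\,v}_{q^{v,\star}_t}-\mathcal P^{\,v}_{p^v}\) is nonnegative. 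Equivalently, \(\mathcal P^{\,v}_{p^v}\ll\mathcal P^{\,v}_{q^{v,\star}_t}\) with Radon--Nikodym density \(f_v\le\zeta_t^v\) almost everywhere.

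The main step, and the only one needing care, is passing to the joint law for an arbitrary measurable joint event \(A\), not just rectangles. Here I will use the density formulation. By conditional independence across obstacles, both joint laws are the product measures in \eqref{eq:joint_nominal_traj_law}. The product of densities \(\prod_v f_v(\xi^v)\) is therefore a density of \(\mathcal P_{\mathbf p}\) with respect to \(\mathcal P_{\mathbf q^\star_t}\). This holds because the two measures agree on measurable rectangles, which form a \(\pi\)-system generating the product \(\sigma\)-algebra, and uniqueness of measures then extends the agreement to the whole \(\sigma\)-algebra. Hence
\[
\mathcal P_{\mathbf p}(A)=\int_A\prod_v f_v\,d\mathcal P_{\mathbf q^\star_t}\le\prod_v\zeta_t^v\,\mathcal P_{\mathbf q^\star_t}(A)=\zeta_t\,\mathcal P_{\mathbf q^\star_t}(A).
\]
As an alternative, I could argue by induction on \(V\) using Tonelli: integrate out one coordinate at a time, and apply the one-obstacle inequality to each section of \(A\).

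For the violation statement, fix \(\vartheta\). I take \(A_\vartheta:=\{\xi:\vartheta\notin\Theta_\xi\}\), assuming it is measurable, as is implicit in the definition of \(\mathrm V_{\mathcal P}\). Then \(\mathrm V_{\mathcal P}(\vartheta)=\mathcal P(A_\vartheta)\), and the joint bound gives \(\mathrm V_{\mathcal P_{\mathbf p}}(\vartheta)\le\zeta_t\mathrm V_{\mathcal P_{\mathbf q^\star_t}}(\vartheta)\). The final implication follows by multiplying the hypothesis \(\mathrm V_{\mathcal P_{\mathbf q^\star_t}}(\vartheta)\le\epsilon/\zeta_t\) by \(\zeta_t>0\). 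Here \(\zeta_t\ge1\), since \(\zeta_t^v\ge\sum_m u^v_{t,m}\ge1\) by the minimum-domination benchmark.

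The argument is deterministic given \(\mathcal F_t\) and holds for each \(\mathbf p\in\mathcal C_t\). Combining it with Proposition~\ref{prop:ambiguity_coverage} to cover the true law \(\mathbf p^\star\) is left to the downstream theorem.
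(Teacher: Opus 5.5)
Your proof is correct, but it lifts the obstacle-wise bound to joint events by a different route than the paper.

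The paper never forms per-obstacle measure inequalities or Radon--Nikodym densities. It writes the joint law directly as a finite mixture over joint mode tuples $\mathbf m=(m_1,\dots,m_V)$, namely $\mathcal P_{\mathbf p}=\sum_{\mathbf m}\bigl(\prod_v p^v_{m_v}\bigr)\bigotimes_v K^v_{m_v}$; this is just the law of total probability over the independently sampled joint mode. It then multiplies the coefficient bounds $p^v_{m_v}\le\zeta^v_t(q^{v,\star}_t)\,q^{v,\star}_{t,m_v}$ across obstacles. Summing against the nonnegative component measures $\bigotimes_v K^v_{m_v}$ gives the inequality for every measurable joint $A$ at once.

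So the paper multiplies first and then sums. You sum first, obtaining $\mathcal P^v_{p^v}\le\zeta^v_t\mathcal P^v_{q^{v,\star}_t}$, and then multiply. That order is why you need densities and a $\pi$--$\lambda$ uniqueness step to reach non-rectangular events.

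The paper's route is shorter and exploits the shared-kernel mixture structure. Yours uses only obstacle-wise measure domination plus the product structure, so it proves the more general fact that likelihood-ratio bounds multiply under products. It also shows that any uniform bound on your densities $f_v$ would suffice, and such a bound can be smaller than $\zeta^v_t$ when the kernels overlap.

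One small correction: your aside that the case $\zeta_t=+\infty$ is trivial is not right. In that case $\epsilon/\zeta_t=0$, and $\mathrm V_{\mathcal P_{\mathbf q^\star_t}}(\vartheta)=0$ need not imply $\mathrm V_{\mathcal P_{\mathbf p}}(\vartheta)\le\epsilon$; mutually singular kernels give a counterexample. The proposition is meant under finiteness. Finiteness holds in the paper's use because $q^{v,\star}_t\in\mathcal Q^v_t$ gives $\zeta^v_t(q^{v,\star}_t)\le\widehat\zeta^v_t<\infty$.
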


\noindent\emph{Proof.}
Proposition~\ref{prop:domination_envelope} gives
\[
p_{m_v}^v
\le
\zeta_t^v(q_t^{v,\star})
q_{t,m_v}^{v,\star}
\]
for each obstacle. Multiplying across independent obstacles and
summing over the joint mode mixture gives the measure inequality.
Taking
\[
A=\{\xi:\vartheta\notin\Theta_\xi\}
\]
gives the violation bound.
\hfill\(\square\)

\subsection{Safe-Horizon transfer}

\paragraph{Conservatism of obstacle-wise transfer.}
For every obstacle,
\[
\zeta_t^v(q_t^v)
\ge
\sum_{m=1}^{M_v}u_{t,m}^v
\ge
1.
\]
Consequently,
\[
\zeta_t
\ge
\prod_{v=1}^{V}
\left(
\sum_{m=1}^{M_v}u_{t,m}^v
\right)
\ge
1.
\]
Thus the obstacle-wise transfer factors compound multiplicatively.
If the individual factors remain bounded away from one, the joint
tightening can grow rapidly with \(V\). 
\begin{thm}[Safe-Horizon distribution transfer]
\label{thm:safe_horizon_transfer}

Fix a replanning instant \(t\). Let \(\mathcal F_t\) be the pre-certification information defined above. Suppose
\[
S\in\mathbb N,
\qquad
\bar n\in\{0,\ldots,S\}
\]
are \(\mathcal F_t\)-measurable and therefore fixed before drawing the
current certification scenarios. Conditional on \(\mathcal F_t\), let
\[
\Xi_t
:=
(\xi_t^{(1)},\ldots,\xi_t^{(S)}),
\qquad
\xi_t^{(i)}\mid\mathcal F_t
\overset{\mathrm{iid}}{\sim}
\mathcal P_{\mathbf q_t^\star}.
\]

Let \(\mathcal C_t\) be constructed as in
Proposition~\ref{prop:ambiguity_coverage}, and suppose
\(q_t^{v,\star}\in\mathcal Q_t^v\) for every obstacle \(v\), so that
Proposition~\ref{prop:law_domination} applies. Let
\(n_{\Xi_t}\) denote the support size of the SH-MPC solution
\(\vartheta_{\Xi_t}^\star\), and suppose \(S\) and \(\bar n\) are
selected such that
\[
\epsilon_S(\bar n)
\le
\frac{\epsilon}{\zeta_t}.
\]

If Safe-Horizon certifies only solutions satisfying
\[
n_{\Xi_t}\le\bar n,
\]
then
\[
\mathbb P
\left[
\mathrm{V}_{\mathcal P_{\mathbf p^\star}}
(\vartheta_{\Xi_t}^\star)
>
\epsilon,\;
n_{\Xi_t}\le\bar n
\right]
\le
\beta_{\mathrm{DRO}}
+
\beta_{\mathrm{SH}}.
\]
\end{thm}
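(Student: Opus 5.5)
The plan is a union bound over two failure events: the calibration event that $\mathbf p^\star\notin\mathcal C_t$, and the Safe-Horizon event under the sampling law $\mathcal P_{\mathbf q_t^\star}$. Define
\[
G:=\{\mathbf p^\star\in\mathcal C_t\},\qquad
B:=\left\{\mathrm{V}_{\mathcal P_{\mathbf q_t^\star}}(\vartheta^\star_{\Xi_t})>\tfrac{\epsilon}{\zeta_t},\; n_{\Xi_t}\le\bar n\right\}.
\]
I will show the set inclusion
\[
\left\{\mathrm{V}_{\mathcal P_{\mathbf p^\star}}(\vartheta^\star_{\Xi_t})>\epsilon,\; n_{\Xi_t}\le\bar n\right\}\subseteq G^c\cup B.
\]
It then suffices to bound $\mathbb P[G^c]\le\beta_{\mathrm{DRO}}$ and $\mathbb P[B]\le\beta_{\mathrm{SH}}$.

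\textbf{Step 1: the inclusion.} Suppose the outcome lies in $G\setminus B$ and $n_{\Xi_t}\le\bar n$. Then $\mathrm{V}_{\mathcal P_{\mathbf q_t^\star}}(\vartheta^\star_{\Xi_t})\le\epsilon/\zeta_t$. Since $\mathbf p^\star\in\mathcal C_t$ and $q_t^{v,\star}\in\mathcal Q_t^v$, we have $\zeta_t<\infty$, so Proposition~\ref{prop:law_domination} applies. Its final implication, applied pointwise to the realized decision $\vartheta=\vartheta^\star_{\Xi_t}$, gives $\mathrm{V}_{\mathcal P_{\mathbf p^\star}}(\vartheta^\star_{\Xi_t})\le\epsilon$. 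This is deterministic given $\mathcal F_t$ and $\Xi_t$, so no measurability subtlety arises beyond the fact that the violation functional is evaluated at a fixed $\vartheta$.

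\textbf{Step 2: the calibration term.} The bound $\mathbb P[G^c]\le\beta_{\mathrm{DRO}}$ is Proposition~\ref{prop:ambiguity_coverage}. Here $G$ depends only on the calibration data, which is contained in $\mathcal F_t$.

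\textbf{Step 3: the Safe-Horizon term.} Conditional on $\mathcal F_t$, the quantities $\mathbf q_t^\star$, $\zeta_t$, $S$ and $\bar n$ are fixed, and the scenarios are i.i.d.\ from $\mathcal P_{\mathbf q_t^\star}$. The nonconvex scenario bound of \cite{degroot2023SBMotionBounded} therefore applies with $\mathcal P_{\mathbf q_t^\star}$ in place of $\mathcal P_{\widehat{\mathbf p}_t}$:
\[
\mathbb P\big[\mathrm{V}_{\mathcal P_{\mathbf q_t^\star}}(\vartheta^\star_{\Xi_t})>\epsilon_S(n_{\Xi_t})\mid\mathcal F_t\big]\le\beta_{\mathrm{SH}}.
\]
The scenario bound holds for any sampling law, so the substitution is legitimate. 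On $\{n_{\Xi_t}\le\bar n\}$, monotonicity of $\epsilon_S$ gives $\epsilon_S(n_{\Xi_t})\le\epsilon_S(\bar n)\le\epsilon/\zeta_t$. Hence $B$ is contained in the scenario-failure event, and $\mathbb P[B\mid\mathcal F_t]\le\beta_{\mathrm{SH}}$. Taking expectations over $\mathcal F_t$ yields $\mathbb P[B]\le\beta_{\mathrm{SH}}$.

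\textbf{Conclusion and main obstacle.} Combining the three steps via the union bound gives the stated $\beta_{\mathrm{DRO}}+\beta_{\mathrm{SH}}$. The delicate point is Step~3, because the threshold $\epsilon/\zeta_t$ and the sampling law are themselves data-dependent. I handle this entirely by conditioning on $\mathcal F_t$, which is exactly why the statement requires $S$ and $\bar n$ (and implicitly $\mathbf q_t^\star$ and $\zeta_t$) to be $\mathcal F_t$-measurable. I would also check that the scenario bound is uniform over the fixed removal budget and the sampling law, so that the conditional bound holds for every realization of $\mathcal F_t$ and integrates cleanly. No independence between the calibration data and the scenarios is needed, since the union bound does not require it.
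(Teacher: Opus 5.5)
Your proposal is correct and matches the paper's proof. Both union-bound the calibration failure $\{\mathbf p^\star\notin\mathcal C_t\}$ (Proposition~\ref{prop:ambiguity_coverage}) with the Safe-Horizon failure under $\mathcal P_{\mathbf q_t^\star}$, using Proposition~\ref{prop:law_domination} on the coverage event, monotonicity $\epsilon_S(n_{\Xi_t})\le\epsilon_S(\bar n)\le\epsilon/\zeta_t$, and conditioning on $\mathcal F_t$ before averaging; your explicit inclusion into $G^c\cup B$ just spells out the step the paper leaves implicit.
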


Here the probability is taken jointly over the calibration data and
the current certification-scenario draw.

\noindent\emph{Proof.}
Let
\[
E_{\rm cal}
:=
\{\mathbf p^\star\in\mathcal C_t\}.
\]
By Proposition~\ref{prop:ambiguity_coverage},
\[
\mathbb P(E_{\rm cal}^c)
\le
\beta_{\rm DRO}.
\]
On \(E_{\rm cal}\),
Proposition~\ref{prop:law_domination} gives
\[
\mathrm{V}_{\mathcal P_{\mathbf p^\star}}
(\vartheta_{\Xi_t}^\star)
\le
\zeta_t
\mathrm{V}_{\mathcal P_{\mathbf q_t^\star}}
(\vartheta_{\Xi_t}^\star).
\]
For \(n_{\Xi_t}\le\bar n\),
\[
\epsilon_S(n_{\Xi_t})
\le
\epsilon_S(\bar n)
\le
\epsilon/\zeta_t.
\]
Conditional on \(\mathcal F_t\), Safe-Horizon bounds failure of this
certificate by \(\beta_{\rm SH}\); averaging over \(\mathcal F_t\)
preserves the bound. A union bound with \(E_{\rm cal}^c\) completes
the proof.
\hfill\(\square\)

Theorem~\ref{thm:safe_horizon_transfer} is a pointwise
replanning-time certificate, not a time-uniform closed-loop safety
guarantee. Simultaneous coverage over multiple replanning instants
would require an allocation of failure probability over time or a
time-uniform calibration procedure. In addition, the product form of
\(\zeta_t\) can make the transferred certificate increasingly
conservative as the number of simultaneously modeled obstacles grows.

\section{Risk-Aware Wasserstein Construction}

In the preceding section, the distribution-transfer certificate was derived without requiring Wasserstein geometry. We now introduce a Wasserstein transport constraint and a risk-aware objective as implementation-level restrictions for selecting a particular certificate-valid sampling law. These restrictions affect the properties of the selected distribution but are not required for validity of Theorem~\ref{thm:safe_horizon_transfer}. We refer to the original SH-MPC formulation \cite{degroot2023SBMotionBounded} and our accompanying implementation at \url{https://github.com/stephen-crawford/DRO_SHMPCC/tree/8439eb436647dc947b9e3006efb4d6e6f904f5ae} for standard SH-MPC details.

\subsection{Wasserstein budget}

We introduce Wasserstein geometry as a mechanism to constrain how the reweighted distribution may move probability mass away from the nominal belief.

Using the geometry-dependent ground cost \(D_t^v\) defined in
Section~\ref{sec:ground_cost}, for discrete distributions
\(a,b\in\Delta_{\mathcal M_v}\), define
\[
W_D(a,b)
:=
\min_{\lambda\ge0}
\sum_{i,j}D_{ij}\lambda_{ij}
\]
subject to
\[
\sum_j\lambda_{ij}=a_i,
\qquad
\sum_i\lambda_{ij}=b_j.
\]
We select a transport budget
\begin{equation}\label{eq:radius_calibration}
\rho_t^v
=
\max_{p\in\mathcal C_t^v}
W_{D_t^v}(\widehat p_t^v,p).
\end{equation}
Thus every statistically plausible mode law lies within transport
cost \(\rho_t^v\) of the nominal belief. Importantly, this
Wasserstein neighborhood is used below as a geometry-aware design
constraint on the selected sampling law \(q_t^v\); the domination
envelope itself is still computed directly from \(\mathcal C_t^v\).

For fixed \(\widehat p_t^v\) and \(D_t^v\),
\(W_{D_t^v}(\widehat p_t^v,p)\) is convex and piecewise linear in
\(p\). Hence its maximum over the compact polytope
\(\mathcal C_t^v\) is attained at an extreme point. For the
low-dimensional mode sets considered here, we compute
\(\rho_t^v\) exactly by enumerating the vertices of
\(\mathcal C_t^v\) and solving the corresponding discrete
optimal-transport problem at each vertex.

The resulting radius is used only as a geometry-aware transport
budget for selecting \(q_t^v\); it does not enlarge the confidence
set used in the distribution-transfer certificate.

\subsection{Ground cost}\label{sec:ground_cost}
The transfer result of Section~\ref{sec:dist_transfer} does not
require a Wasserstein metric. Wasserstein geometry is instead used to
regularize how probability mass may be reallocated when constructing
the scenario-sampling distribution. In our switching-dynamics setting,
mode labels such as \texttt{turn left} and \texttt{turn right} do not
themselves define a meaningful geometry. For the ground-cost
construction, let
\(\mu_{m,k}^v\in\mathbb R^2\) and
\(\Sigma_{m,k}^v\in\mathbb S_+^2\)
denote the mean and covariance of the positional marginal of the
mode-\(m\) Gaussian prediction at horizon step \(k\).
We define
\begin{equation}\label{eq:ground_cost}
D_{ij,t}^{v}
=
\frac{1}{N}
\sum_{k=1}^{N}
W_2\!\left(
\mathcal{N}(\mu_{i,k}^{v},\Sigma_{i,k}^{v}),
\mathcal{N}(\mu_{j,k}^{v},\Sigma_{j,k}^{v})
\right).
\end{equation} Although the mode dynamics \((A_m^v,b_m^v,G_m^v)\) are fixed, the induced predictions are conditioned on the current obstacle state. Consequently, \(D_t^v\) may vary across replanning instants.

Because $W_2^2$ does not satisfy the triangle inequality, we use $W_2$ as the ground cost. Averaging the per-step $W_2$ distances preserves nonnegativity, symmetry, and the triangle inequality. The resulting $D^{v}_t$ therefore provides a geometry-consistent transport cost over the mode predictions; it is a metric on the induced positional Gaussian predictions and may be a pseudometric on the mode labels if two distinct modes generate identical positional predictions. 

\subsection{Risk score}\label{sec:risk_aware_wass}

The distribution-transfer certificate does not depend on a collision
risk surrogate; it is valid for any sampling law satisfying the
domination constraints derived above. We nevertheless require a rule
for selecting among these certificate-valid laws. We therefore assign
each mode a nonnegative score \(r_{t,m}^v\) that upper-bounds a
trajectory-level penetration-risk quantile and use this score only to
preferentially sample modes judged dangerous.

For obstacle \(v\), mode \(m\), and horizon step
\(k\in\{1,\ldots,N\}\), suppressing the current replanning index
\(t\) where unambiguous, let the predicted obstacle position satisfy
\[
X_k^{v,m}
\sim
\mathcal N
\left(
\mu_{m,k}^v,
\Sigma_{m,k}^v
\right).
\]

The ego footprint is represented by \(n_d\) discs. Let
\(\overline p_{d,k}\) denote the frozen center of disc \(d\) at
horizon step \(k\), obtained from the shifted previous solution or
cold-start reference, and let
\[
R^v:=r_d+r_{\rm obs}^v+r_{\rm offset}.
\]
For mode \(m\), define the penetration depth
\[
\mathbf \delta_{d,k}^{v,m}
=
\left[
R^v-
\|X_k^{v,m}-\overline p_{d,k}\|_2
\right]_+,
\ \ \ 
\mathbf \delta_t^{v,m}
=
\max_{d,k}\mathbf \delta_{d,k}^{v,m}.
\]

Because the Euclidean norm of a Gaussian vector is not generally
Gaussian, we project along the frozen unit direction
\[
\hat n_{d,k}^{v,m}
=
\frac{\mu_{m,k}^v-\overline p_{d,k}}
{\|\mu_{m,k}^v-\overline p_{d,k}\|_2},
\]
using a fixed unit-vector fallback when the denominator is below a specified numerical safety parameter
\(\varepsilon_{\min} > 0\). Define the projected penetration variable
\[
Z_{d,k}^{v,m}
:=
R^v
-
(\hat n_{d,k}^{v,m})^\top
\left(
X_k^{v,m}-\overline p_{d,k}
\right).
\]
Since \(X_k^{v,m}\) is Gaussian,
\(Z_{d,k}^{v,m}\) is Gaussian with
\[
\mu_{Z,d,k}^{v,m}
=
R^v-
(\hat n_{d,k}^{v,m})^\top
(\mu_{m,k}^v-\overline p_{d,k}),
\]
\[
(\sigma_{Z,d,k}^{v,m})^2
=
(\hat n_{d,k}^{v,m})^\top
\Sigma_{m,k}^v
\hat n_{d,k}^{v,m}.
\]
Since \(\hat n_{d,k}^{v,m}\) is a unit vector,
Cauchy--Schwarz gives the pointwise bound
\[
\mathbf \delta_{d,k}^{v,m}
\le
[Z_{d,k}^{v,m}]_+.
\]

We use the left-quantile convention
\[
\operatorname{VaR}_{\alpha}(Y)
:=
\inf\{y\in\mathbb R:F_Y(y)\ge\alpha\}.
\]

For a desired trajectory-level confidence
\(\alpha\in[1/2,1)\) and prescribed standard-deviation floor
\(\sigma_{\min}\ge0\), define
\[
\alpha'
=
1-\frac{1-\alpha}{Nn_d},
\qquad
\widetilde\sigma_{Z,d,k}^{v,m}
=
\max\{\sigma_{Z,d,k}^{v,m},\sigma_{\min}\},
\]
and the mode-risk score
\begin{equation}\label{eq:risk_score}
r_{t,m}^v
=
\max_{d,k}
\left[
\mu_{Z,d,k}^{v,m}
+
\Phi^{-1}(\alpha')
\widetilde\sigma_{Z,d,k}^{v,m}
\right]_+.
\end{equation}

\begin{lemma}[Trajectory-level penetration-risk bound]
\label{lemma:risk_score_bound}
The score in \eqref{eq:risk_score} satisfies
\[
\operatorname{VaR}_{\alpha}
(\mathbf \delta_t^{v,m})
\le
r_{t,m}^v.
\]
\end{lemma}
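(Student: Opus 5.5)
The proof chains three facts. First, a pointwise domination: $\delta_t^{v,m}\le\max_{d,k}[Z_{d,k}^{v,m}]_+$ holds almost surely. Second, a union bound over the $Nn_d$ disc--step pairs at the Bonferroni-adjusted level $\alpha'$. Third, monotonicity of $\operatorname{VaR}_\alpha$ under almost-sure ordering. Together these reduce the claim to showing
\[
\mathbb P\bigl(\delta_t^{v,m}> r_{t,m}^v\bigr)\le 1-\alpha ,
\]
because then $F_{\delta}(r_{t,m}^v)\ge\alpha$. The left-quantile definition, an infimum over $y$ with $F(y)\ge\alpha$, then gives $\operatorname{VaR}_\alpha(\delta_t^{v,m})\le r_{t,m}^v$ directly.

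\textbf{Step 1 (pointwise domination).} For each $(d,k)$, Cauchy--Schwarz gives $(\hat n_{d,k}^{v,m})^\top(X_k^{v,m}-\overline p_{d,k})\le\|X_k^{v,m}-\overline p_{d,k}\|_2$ because $\hat n$ is a unit vector. This holds in the fallback case as well. Hence $R^v-\|X-\overline p\|\le Z_{d,k}^{v,m}$. Since $[\cdot]_+$ is monotone, $\delta_{d,k}^{v,m}\le[Z_{d,k}^{v,m}]_+$, and taking the maximum gives $\delta_t^{v,m}\le\max_{d,k}[Z_{d,k}^{v,m}]_+$.

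\textbf{Step 2 (per-pair tail bound).} Set $c_{d,k}:=\mu_{Z,d,k}^{v,m}+\Phi^{-1}(\alpha')\widetilde\sigma_{Z,d,k}^{v,m}$, so that $r_{t,m}^v=\max_{d,k}[c_{d,k}]_+\ge[c_{d,k}]_+$. The key claim is $\mathbb P([Z_{d,k}]_+>[c_{d,k}]_+)\le 1-\alpha'$.
\begin{itemize}
\item If $\sigma_Z>0$: since $\alpha'\ge\alpha\ge 1/2$, we have $\Phi^{-1}(\alpha')\ge0$, so $c_{d,k}\ge\mu_Z+\Phi^{-1}(\alpha')\sigma_Z$. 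Therefore $\mathbb P(Z>c_{d,k})\le 1-\Phi(\Phi^{-1}(\alpha'))=1-\alpha'$.
\item If $\sigma_Z=0$: $Z=\mu_Z\le c_{d,k}$ almost surely, so the probability is $0$.
\end{itemize}
The positive part is handled by noting that $\{[Z]_+>[c]_+\}\subseteq\{Z>c\}$. Indeed, if $[Z]_+>[c]_+\ge0$, then $Z>0$, so $Z=[Z]_+>[c]_+\ge c$.

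\textbf{Step 3 (union bound and conclusion).} We have
\[
\{\delta_t^{v,m}>r_{t,m}^v\}\subseteq\bigcup_{d,k}\{[Z_{d,k}]_+>[c_{d,k}]_+\}.
\]
This follows from Step 1 and $r\ge[c_{d,k}]_+$ for each pair. The union bound over the $Nn_d$ pairs then gives probability at most $Nn_d(1-\alpha')=1-\alpha$. No independence across $(d,k)$ is needed.

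The only delicate point is Step 2. It relies on the sign condition $\alpha\ge1/2$, which makes the standard-deviation floor $\sigma_{\min}$ only enlarge the quantile. It also needs the positive-part and degenerate-variance cases handled separately. I expect the remaining steps to be routine.
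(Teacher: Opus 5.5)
Your proof is correct and follows the paper's route: the Cauchy--Schwarz projection bound $\delta_{d,k}^{v,m}\le[Z_{d,k}^{v,m}]_+$, a union bound over the $Nn_d$ disc--step pairs at level $\alpha'$, and the Gaussian quantile with $\Phi^{-1}(\alpha')\ge 0$, so the floor $\widetilde\sigma_Z\ge\sigma_Z$ only raises the threshold. The difference is cosmetic: the paper goes through the intermediate $\tau=\max_{d,k}\operatorname{VaR}_{\alpha'}([Z_{d,k}^{v,m}]_+)$ and its closed form, whereas you bound $\mathbb P(\delta_t^{v,m}>r_{t,m}^v)$ directly via $\{[Z]_+>[c]_+\}\subseteq\{Z>c\}$, which also makes the degenerate case $\sigma_Z=0$ explicit (and the VaR-monotonicity fact in your outline is never actually used).
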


\noindent\emph{Proof.}
Let
\[
\tau
=
\max_{d,k}
\operatorname{VaR}_{\alpha'}
([Z_{d,k}^{v,m}]_+).
\]
The pointwise projection bound and a union bound give
\[
\mathbb P[\mathbf \delta_t^{v,m}>\tau]
\le
Nn_d(1-\alpha')
=
1-\alpha.
\]
For Gaussian \(Z\),
\[
\operatorname{VaR}_{\alpha'}([Z]_+)
=
[\mu_Z+\Phi^{-1}(\alpha')\sigma_Z]_+.
\]
Since \(\alpha'\ge1/2\) and
\(\widetilde\sigma_Z\ge\sigma_Z\), replacing \(\sigma_Z\) by
\(\widetilde\sigma_Z\) can only increase the bound.
\hfill\(\square\)

Lemma~\ref{lemma:risk_score_bound} justifies the ordering used to
select a sampling law; the risk score does not enter the
distribution-transfer guarantee of
Theorem~\ref{thm:safe_horizon_transfer}.

Having defined \(r_{t,m}^v\), we now impose the Wasserstein design
budget on the certificate-valid set:
\[
\mathcal Q_{t,W}^v
:=
\left\{
q\in\mathcal Q_t^v:
W_{D_t^v}(\widehat p_t^v,q)
\le
\rho_t^v
\right\}.
\]
Because \(\widehat p_t^v\in\mathcal Q_t^v\) and
\(W_{D_t^v}(\widehat p_t^v,\widehat p_t^v)=0\),
the set \(\mathcal Q_{t,W}^v\) is nonempty.

Let
\[
\mathcal H_t^v
:=
\{m\in\mathcal M_v:r_{t,m}^v\ge\tau_r\},
\qquad
\tau_r>0,
\]
and define
\[
\mathcal Q_{t,\mathrm{risk}}^v
:=
\left\{
q\in\mathcal Q_{t,W}^v:
q_m\ge\widehat p_{t,m}^v,\;
\forall m\in\mathcal H_t^v
\right\}.
\]
Finally,
\[
q_t^{v,\star}
\in
\arg\max_{q\in\mathcal Q_{t,\mathrm{risk}}^v}
\langle q,r_t^v\rangle.
\]

\paragraph{Properties of the risk-aware sampling law.}
Because
\(\widehat p_t^v\in\mathcal Q_{t,\mathrm{risk}}^v\),
the feasible set is nonempty. Moreover,
\(\mathcal Q_{t,\mathrm{risk}}^v\) is a closed subset of the compact
simplex \(\Delta_{\mathcal M_v}\), and is therefore compact.
Since \(q\mapsto\langle q,r_t^v\rangle\) is continuous, an optimizer
\(q_t^{v,\star}\) exists. Any optimizer satisfies
\[
\langle q_t^{v,\star},r_t^v\rangle
\ge
\langle\widehat p_t^v,r_t^v\rangle,
\qquad
\zeta_t^v(q_t^{v,\star})
\le
\widehat\zeta_t^v,
\]
with
\[
q_{t,m}^{v,\star}
\ge
\widehat p_{t,m}^v,
\qquad
m\in\mathcal H_t^v.
\]
These properties follow respectively from nominal feasibility,
\(q_t^{v,\star}\in\mathcal Q_t^v\), and the dangerous-mode
constraints. Importantly, the risk score determines which
certificate-valid sampling law is selected; it is not required for
the validity of the distribution-transfer certificate.

\section{Results}\label{results}

We evaluate two aspects of the proposed distribution-transfer
construction. First, we isolate how the scenario-sampling law affects
certificate tightening. Second, we evaluate the resulting controller
using independent trajectories drawn from known true mode laws. Unless otherwise stated, experiments use \(\epsilon=0.05\), \(\beta_{\rm SH}=0.01\), \(\beta_{\rm DRO}=0.05\), $\alpha = 0.95$, $\tau_r = 0.01m$, $\sigma_{\min} = 10^{-6}$, support cap \(\bar n=6\), and scenario-removal budget \(R_{\rm rem}=2\). Note that \(S\) and \(\bar n\) are selected before drawing the scenario set.

\subsection{Sampling-law ablation}

\begin{table}[t]
\centering
\caption{Sampling-law ablation for \(M=5\), \(g=100\), over 100
independent calibration instances. Here
\(\underline{\zeta}=\sum_m u_m\) is the minimum attainable domination
factor, \(\Delta r=\langle q,r\rangle-\langle\widehat p,r\rangle\),
and \(\Delta q_{\rm dang}\) is the change in total probability assigned
to dangerous modes relative to nominal sampling.}
\label{tab:certificate_ablation}
\small
\setlength{\tabcolsep}{3.2pt}
\resizebox{\columnwidth}{!}{%
\begin{tabular}{lccccc}
\toprule
Sampling law &
\(\zeta/\underline{\zeta}\) &
\(\zeta\) &
Required \(S\) &
\(\Delta r\) &
\(\Delta q_{\rm dang}\) \\
\midrule
Envelope optimum \(q^{\rm env}\)
    & 1.000 & 1.573 & 1931 & 0.0149 & -0.0356 \\
Wasserstein min-\(\zeta\) \(q^W\)
    & 1.000 & 1.573 & 1931 & 0.0149 & -0.0356 \\
Nominal \(\widehat p\)
    & 1.420 & 2.225 & 2901 & 0      & 0       \\
Risk-aware \(q^\star\)
    & 1.420 & 2.225 & 2901 & 0.0458 & +0.0624 \\
\bottomrule
\end{tabular}%
}
\end{table}

Table~\ref{tab:certificate_ablation} compares an envelope-optimal law
\(q^{\rm env}\), a Wasserstein-constrained minimum-domination law
\(q^W\), nominal sampling \(\widehat p\), and the proposed risk-aware
law \(q^\star\). The first two serve as offline references: \(q^{\rm env}\)
attains the minimum possible domination factor, while \(q^W\) tests
whether imposing the Wasserstein transport constraint increases this
minimum.

For \(M=5\), \(g=100\), and 100 independent calibration instances,
both \(q^{\rm env}\) and \(q^W\) attain median
\(\zeta=1.573\), requiring \(1931\) scenarios. Thus, the Wasserstein
constraint is non-binding with respect to the minimum attainable
domination factor in these instances. Nominal sampling and
\(q^\star\) both yield median \(\zeta=2.225\) and \(S=2901\), but
\(q^\star\) increases the median probability assigned to dangerous
modes by \(0.0624\) and the risk objective by \(0.0458\). The proposed
law therefore biases sampling toward dangerous modes without
additional certificate tightening relative to nominal sampling.
Conversely, minimizing \(\zeta\) alone reduces dangerous-mode mass by
\(0.0356\), illustrating the tradeoff between certificate tightness
and risk-directed sampling.

\subsection{Held-out true-law validation}

\begin{figure*}[!t]
    \centering
    \includegraphics[width=0.92\textwidth]
        {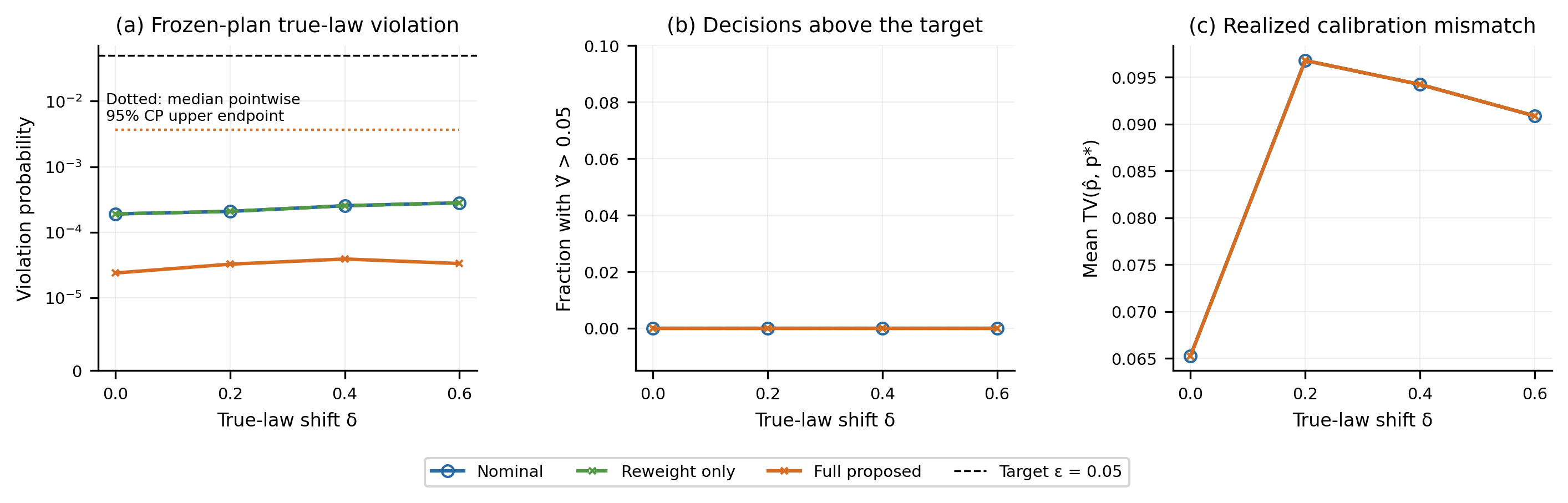}
    \caption{Held-out true-law validation. Solid curves show equal-seed-weighted mean violation estimates over 30 paired seeds; dotted curves show median pointwise 95\% Clopper–Pearson upper endpoints from 1000 independent held-mode validation trajectories per returned decision.}
    \label{fig:true_law_validation}
\end{figure*}

The closed-loop study separates probability reweighting from
transfer-induced certificate tightening. We use a straight,
one-obstacle, three-mode setting with \(g=25\) calibration
observations. Let \(m_{\rm risk}\) denote the mode with the largest
frozen initial risk score and define $ p^\star(\gamma)
=
(1-\gamma)(0.8,0.15,0.05)
+
\gamma e_{m_{\rm risk}}$ where
$\gamma\in\{0,0.2,0.4,0.6\}$.
Calibration observations are redrawn from the corresponding
\(p^\star(\gamma)\); thus the experiment evaluates finite-sample
mode-probability estimation error under different true laws rather
than an intentionally stale nominal model. The nominal
controller samples from \(\widehat p\) using the standard scenario
budget \(S_0\), the reweight-only controller samples from \(q^\star\) using
the same \(S_0\), and the full transfer controller samples from \(q^\star\). 
The full-transfer controller samples from \(q^\star\) and increases \(S\) to satisfy \(\epsilon_S(\bar n)\le\epsilon/\zeta_t\). We run 30 paired trials per law and controller and validate each returned decision using 1000
independent held-mode trajectories from the corresponding true law; results are shown in Fig.~\ref{fig:true_law_validation}.

All \(360\) closed-loop trials completed without an observed physical collision. The equal-seed-weighted mean held-out violation estimate was \(2.34\times10^{-4}\) for nominal and reweight-only sampling and \(3.23\times10^{-5}\) for full transfer, approximately \(86\%\) lower.

Because reweight-only changes the sampling law without changing the
scenario budget, its agreement with nominal sampling suggests that
the lower held-out estimate is not attributable to reweighting alone.
Rather, it accompanies the additional scenario tightening used by the
full-transfer controller. This comes at substantial computational
cost: nominal and reweight-only use \(S_0=1123\), whereas full transfer
requires as many as \(14828\) scenarios. Median measured solve times
increase from \(13.5\)--\(13.9\) ms to \(72.5\)--\(156.3\) ms under
the four-process experimental setup, excluding held-out validation.
These results characterize the central tradeoff of the method:
finite-sample distribution transfer can substantially increase the
scenario requirement even when the sampling-law optimization itself
does not increase the domination factor relative to nominal sampling.


\section{Conclusion}\label{conclusion}

We developed a finite-sample distribution-transfer certificate for
Safe-Horizon MPC under categorical mode-probability uncertainty.
Clopper--Pearson confidence sets and a multiplicative domination bound transfer the Safe-Horizon violation guarantee from a selected scenario-sampling law to the unknown true mode law: the probability of returning an accepted solution whose true-law violation exceeds \(\epsilon\) is bounded by \(\beta_{\rm DRO}+\beta_{\rm SH}\). Wasserstein geometry and a trajectory-risk surrogate are used
separately to select a sampling law that prioritizes dangerous modes
while controlling the domination factor. Experiments show that this
risk-aware selection can increase dangerous-mode sampling without
increasing median certificate tightening relative to nominal sampling,
while enforcing the complete transfer condition yields lower held-out
violation estimates at the cost of substantially larger scenario
counts and solve times. The principal limitation is the conservatism
of the multiplicative transfer, particularly as obstacle-wise factors
compound. Future work will investigate tighter multi-obstacle transfer
bounds and extensions to state-dependent or Markov mode models.


\section{Acknowledgements}\label{acknowledgements}

Throughout this publication, basic proof formulation was assisted by ChatGPT GPT-5.6-sol. Code implementation, execution, and figure generation used ChatGPT GPT-6-astra with manual review by the authors. Code implementation and execution also made use of Claude-5-Sonnet. All math was verified manually and reformatted by the authors. 

This material is based upon work supported by the National Science Foundation CISE Graduate Fellowships under Grant No. 2313998. Any opinions, findings, and conclusions or recommendations expressed in this material are those of the author(s) and do not necessarily reflect the views of the National Science Foundation.

\bibliographystyle{IEEEtran}
\bibliography{tex}
\end{document}